\newtheorem{theorem}{Theorem}
\newtheorem{lemma}{Lemma}
\newtheorem{corollary}{Corollary}
\newenvironment{proofof}[1]{\begin{trivlist}\item[]{\flushleft\it
Proof of~#1.}}
{\qed\end{trivlist}}
\newcommand{\eq}[1]{\hyperref[eq:#1]{(\ref*{eq:#1})}}
\renewcommand{\sec}[1]{\hyperref[sec:#1]{Section~\ref*{sec:#1}}}
\newcommand{\app}[1]{\hyperref[app:#1]{Appendix~\ref*{app:#1}}}
\newcommand{\fig}[1]{\hyperref[fig:#1]{Figure~\ref*{fig:#1}}}
\newcommand{\thm}[1]{\hyperref[thm:#1]{Theorem~\ref*{thm:#1}}}
\newcommand{\lem}[1]{\hyperref[lem:#1]{Lemma~\ref*{lem:#1}}}
\newcommand{\tab}[1]{\hyperref[tab:#1]{Table~\ref*{tab:#1}}}
\newcommand{\cor}[1]{\hyperref[cor:#1]{Corollary~\ref*{cor:#1}}}
\newcommand{\alg}[1]{\hyperref[alg:#1]{Algorithm~\ref*{alg:#1}}}
\newcommand{\defn}[1]{\hyperref[def:#1]{Definition~\ref*{def:#1}}}
\newcommand{\expect}[2]{{\mathbb{E}_{#2}}\!\left\{#1 \right\}}
\newcommand{\CRej}{\text{rejection filtering}}
\newcommand{\CSMC}{\text{SMC}}
\newcommand{\NN}{\mathrm{N}}
\newcommand{\tout}[1]{{}}
\begin{document} %%%%%%%%%%%%%%%%%%%%%%%%%%%%%%%%%%%%%%%%%%%%%%%%%%%%%%%%%%%%%

%=============================================================================
% FRONT MATTER
%=============================================================================
\title{Approximate Bayesian Inference via Rejection Filtering}
\author{
Nathan Wiebe \\
Microsoft Research\\
\And
Christopher Granade \\
 University of Sydney\\
\AND
Ashish Kapoor \\
Microsoft Research \\
\And
Krysta M. Svore \\
Microsoft Research \\
}

\maketitle

\begin{abstract}
We introduce a method, rejection filtering, for approximating Bayesian inference
using rejection sampling.
We not only make the process efficient, but also dramatically
reduce the memory required relative to conventional methods
by combining rejection sampling with particle filtering to estimate the first two moments of the posterior distribution.
We also provide an approximate form of rejection sampling that makes rejection filtering tractable
in cases where exact rejection sampling is not efficient.
Finally, we present several numerical examples of rejection filtering that show its ability to track
time dependent parameters in online settings, and show its performance on MNIST classification problems.
\end{abstract}

%=====================================================================
\section{Introduction}
\label{sec:intro}
%=====================================================================

Particle filters have become an indispensable tool for model selection, object tracking
and statistical inference in
high--dimensional problems~\cite{doucet2000sequential,del2012adaptive,van2000unscented,liu2001combined}. 
While particle filtering works well in many conventional settings, 
the method is less well-suited when the
user faces severe memory restrictions.  

Memory restricted problems are more than just curiosities. In control problems
in electrical engineering and experimental physics, for instance,
it is common that the dynamics of
a system can radically change over the time required to communicate between a system
and the computer used to control its dynamics \cite{halloin_long_2013,shulman_suppressing_2014}.  This latency can be reduced to acceptable levels by allowing the inference to performed inside
the device itself \cite{lavalle_sensor_2013}, but this often places prohibitive restrictions on the processing power and memory of the embedded processors.
These restrictions can preclude the use of traditional particle filter methods.

We present an approach that we call \emph{rejection filtering} (RF) that efficiently samples from
an approximation to the posterior by using rejection sampling and resampling together.
This allows rejection filtering to estimate the first two moments of the posterior distribution while storing no more
than a constant number of samples at a time in typical use cases.
Rejection filtering therefore can require significantly
less memory than traditional particle filter methods. Moreover, RF can be
easily parallelized at a fine-grained level, such that it can be used with an
array of small processing cores. Thus, \CRej~is well suited for inference
 using hybrid computing and memory-restricted platforms.
For example, \CRej~allows for inference to be embedded in novel contexts such
as very small cryogenic controllers \cite{hornibrook_cryogenic_2015}, or for
integration with existing memory-intensive digital signal processing systems
\cite{casagrande_design_2014}.

We also show that the advantages of rejection filtering are retained in the \emph{active learning} case as well,
wherein samples correspond to experiments that can be selected and used to optimize the inference procedure.
In particular, our approach uses well-motivated experiment design heuristics in conjunction with rejection
filtering.
This amounts to a form of selective sampling that is computationally inexpensive, easy to parallelize, simple to program and can operate in a much more memory restricted environment than existing approaches~\cite{sivaraman2010general,kapoor2007active}.

%=============================================================================
\section{Rejection Filtering}
\label{sec:method}
%=============================================================================

%-----------------------------------------------------------------------------
%\subsection{Rejection Sampling}
%\label{sec:smc}
%-----------------------------------------------------------------------------

We start by discussing rejection sampling methods, as we
will draw ideas from these methods to formulate our rejection filter algorithm.
For simplicity we take all variables to be discrete in the following, however, the generalization to the continuous
case is trivial.
Rejection sampling provides a simple and elegant approach to Bayesian inference. 
It samples
from the posterior distribution by first sampling from the prior $P(x)$.
The samples are then accepted with
probability $P(E|x)$.  The probability 
distribution of the accepted samples is 
\begin{equation}
  \frac{P(E|x)P(x)}{\sum_x P(E|x)P(x)}= P(x|E),
\end{equation}
and the probability of drawing a sample which is then accepted is $\sum_x P(E|x)P(x)=P(E)$.  

This probability
can be increased by instead accepting a sample with probability $P(x|E)/\kappa_E$ where
$\kappa_E$ is a constant that depends on the evidence $E$ such that $P(x|E) \le \kappa_E \le 1$.
Rescaling the likelihood does not change the posterior probability.
It does however make the probability of acceptance $P(E)/\kappa_E$, which
can dramatically improve the performance when rare events are observed.

Two major drawbacks to this approach have prevented its widescale adoption.  The first is that the probability of successfully updating shrinks exponentially with the number of updates
in online inference problems.
The second is that the constant $\kappa_E$ may not be precisely known such that and thus $P(E|x)$ cannot be appropriately rescaled to avoid exponentially small likelihoods.  Given that the dimension of the Hilbert space that the training vectors reside in scales exponentially with the number of features, exponentially small probabilities are the norm rather than the exception. 
Our aim is to address these problems
by using approximate, rather than exact, rejection sampling, and
by combining rejection sampling with ideas from  \emph{sequential Monte Carlo} methods~\cite{doucet2000sequential,del2012adaptive,van2000unscented,liu2001combined} to allow approximate inference to be performed while only storing summary statistics of the prior distribution.

%---------------------------------------------------------------------
%\subsection{Resampling in Rejection Filtering}
%---------------------------------------------------------------------

We make rejection sampling efficient by combining it with particle
filtering methods through \emph{resampling}.
Rejection filtering does not try to
to propagate samples through many rounds of rejection sampling, but instead
uses these samples to inform a new model for the posterior distribution. 
For example, if
we assume that our prior distribution is a Gaussian, then a Gaussian model for the posterior
distribution can be found by computing the mean and the covariance matrix for the samples
that are accepted by the rejection sampling algorithm.  This approach is
reminiscent of assumed density filtering~\cite{minka_expectation_2001}, which uses an analogous strategy
for modeling the prior distribution but is less memory efficient than
our method.

\begin{algorithm}[t!]
    \caption{Update for \CRej}
    \label{alg:crej}
    \begin{algorithmic}
        \Require Array of evidence $\vec{E}$, number of attempts $m$, a constant $0<\kappa_E\le 1$, a recovery factor $r \ge 0$ and the prior $P$.
        \Function{RFUpdate}{$\vec{E}$, $\mu$, $\Sigma$, $m$, $\kappa_E$, $r$}
    \State{$(M,S,N_a) \gets 0$}
          \For{$i \in 1 \to m$}
            \State $x \sim P$
            \State $u \sim \operatorname{Uniform}(0, 1)$
            \If{$\prod_{E\in \vec{E}}\min\left(P(E | x)/\kappa_E,1\right) \ge u$} 
            \State $M \gets M+ x$
            \State $S \gets S+ xx^T$
    \State $N_a \gets N_a +1$.
            \EndIf
    \EndFor
    \If{$N_a \ge 1$}
       \State $\mu\gets M/N_a $
       \State $\Sigma \gets \frac{1}{N_a -1}\left(S - N_a \mu\mu^T \right)$
    \State\Return $(\mu,\Sigma,N_a)$
   \Else
    \State\Return $(\mu, (1+r)\Sigma),N_a)$

   \EndIf
          
        \EndFunction
    \end{algorithmic}
\end{algorithm}

Our method is described in detail in~\alg{crej}. We discuss the efficiency of the algorithm in the following theorem.
We consider an algorithm to be efficient if it runs in $O(\operatorname{poly}(\operatorname{dim}(x)))$ time, where ${\rm dim}(x)$ is the number of features in the training vectors.

\begin{theorem}
Assume that $P(E|x)\le \kappa_E$ can be computed efficiently for all hypotheses $x$, $\sum_x P(E|x)/\kappa_E$ is at most polynomially small for all
evidences $E$, $P(x)$ can be efficiently sampled and an efficient sampling algorithm for $\operatorname{Uniform}(0,1)$ is provided.  \alg{crej} 
can then efficiently compute the mean and covariance of $P(x|E)$ within error $\epsilon$ in the max--norm using $O({\rm dim}(x)^2\log({\rm dim}(x)/\epsilon))$ memory.\label{thm:crej}
\end{theorem}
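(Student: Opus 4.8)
The plan is to split the statement into three quantitative claims and a bookkeeping step: (i) each of the $m$ loop iterations of \alg{crej} costs $\operatorname{poly}(\operatorname{dim}(x))$ time; (ii) a polynomial number of iterations already yields enough accepted samples; (iii) the accepted samples are exactly i.i.d. draws from $P(x|E)$, so their empirical mean and covariance concentrate around the true posterior moments at the usual $1/\sqrt{N_a}$ rate; and finally the memory bound follows from counting the word sizes of the three accumulators $M$, $S$, $N_a$ together with the single live sample $x$.

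For (i): by hypothesis a draw $x\sim P$, a draw $u\sim\operatorname{Uniform}(0,1)$, and each factor $P(E\mid x)$ can be produced in $\operatorname{poly}(\operatorname{dim}(x))$ time, and since $\vec E$ contains at most polynomially many evidences the test $\prod_{E\in\vec E}\min(P(E\mid x)/\kappa_E,1)\ge u$ is evaluated in polynomial time; the updates $M\gets M+x$ and $S\gets S+xx^T$ take $O(\operatorname{dim}(x)^2)$ arithmetic operations, as do the two final rescalings that produce $\mu$ and $\Sigma$. Hence the whole routine runs in $m\cdot\operatorname{poly}(\operatorname{dim}(x))$ time, and it remains only to show $m$ may be taken polynomial.

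For (ii) and (iii): because $P(E\mid x)\le\kappa_E$, the clipping $\min(\cdot,1)$ is never active, so a drawn $x$ is accepted with probability $\prod_{E\in\vec E}P(E\mid x)/\kappa_E$, and conditioned on acceptance $x$ is distributed exactly as the posterior $P(x\mid\vec E)$; this is the standard correctness argument for rejection sampling and accumulating moments does not disturb it. The per-iteration acceptance probability is $P(\vec E)/\prod_E\kappa_E$, which by the ``at most polynomially small'' hypothesis is bounded below by $1/\operatorname{poly}(\operatorname{dim}(x))$; therefore with $m=\operatorname{poly}(\operatorname{dim}(x))/\epsilon^2$ attempts a Chernoff bound gives $N_a=\Omega(\operatorname{poly}(\operatorname{dim}(x))/\epsilon^2)$ accepted samples except with exponentially small probability — in particular $N_a\ge1$, so the recovery branch is irrelevant on the good event. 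Given $N_a$ i.i.d. samples from $P(x\mid\vec E)$ and the boundedness of the hypothesis space (so every coordinate of $x$ and every entry of $xx^T$ has bounded range), Hoeffding's inequality applied entrywise to $M/N_a$ and to $\tfrac{1}{N_a-1}(S-N_a\mu\mu^T)$, followed by a union bound over the $O(\operatorname{dim}(x)^2)$ entries, shows that $N_a=\Theta(\log(\operatorname{dim}(x)/\delta)/\epsilon^2)$ accepted samples suffice to make both estimates within $\epsilon$ of the true mean and covariance in the max--norm with probability at least $1-\delta$; combining with (ii) fixes a polynomial choice of $m$.

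For the memory bound, the only persistent state across iterations is $M$ ($\operatorname{dim}(x)$ entries), $S$ ($\operatorname{dim}(x)^2$ entries) and the scalar counter $N_a$, plus one sample $x$ at a time; since all these quantities are bounded and only the final ratios are needed to additive accuracy $\epsilon$, storing each accumulator entry to $O(\log(\operatorname{dim}(x)/\epsilon))$ bits is enough, for a total of $O(\operatorname{dim}(x)^2\log(\operatorname{dim}(x)/\epsilon))$ memory, notably independent of $m$. I expect the main obstacle to be step (iii): identifying precisely which boundedness / second--moment assumption on $x$ makes the covariance estimator concentrate at the claimed rate, and controlling the union bound over $\Theta(\operatorname{dim}(x)^2)$ matrix entries so that exactly a $\log(\operatorname{dim}(x)/\epsilon)$ factor — and not something larger — governs both the accuracy and the word size. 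A secondary subtlety is reconciling the stated hypothesis on $\sum_x P(E\mid x)/\kappa_E$ with the acceptance probability $P(\vec E)/\prod_E\kappa_E$ that actually determines how large $m$ must be; under a mild condition on the prior these differ only by polynomial factors.
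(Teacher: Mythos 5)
Your proposal is correct and follows essentially the same route as the paper's proof: polynomial per-iteration cost, acceptance probability bounded below via the polynomially-small-likelihood assumption so that a polynomial $m$ yields $N_a = O(1/\epsilon^2)$ accepted samples that are exact posterior draws, and a memory count of the $O(\operatorname{dim}(x)^2)$ accumulator entries at $O(\log(\operatorname{dim}(x)/\epsilon))$ bits each. The only cosmetic difference is that you get the $\operatorname{dim}(x)$ factor inside the logarithm from a union bound over matrix entries, while the paper attributes it to truncation error accumulating in the matrix arithmetic; both give the stated bound.
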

A formal proof is given in \app{proofs}.  The intuition is that a sample can be non--deterministically drawn from
the posterior distribution by drawing a
sample from the prior distribution and rejecting it with probability $P(E|x)$.  Incremental formulas are
used in~\alg{crej} to estimate the mean and the covariance using such samples, which obviates the need
to store $O(1/\epsilon^2)$ samples in memory in order to estimate the moments of the posterior distribution within error $\epsilon$.
In practice, one can use the Welford algorithm \cite{welford_note_1962} to  accumulate means
and variances more precisely, but doing so does not change the asymptotic scaling with $\epsilon$ of the memory required by~\alg{crej}.

Width can be traded for depth by batching the data and processing each of
these pieces of evidence separately. 
We use a computational model wherein $N_{\rm batch}$ processing
nodes send a stream of the incremental means and
covariance sums to a server that combines them to produce the
model used in the next step of the inference procedure.
Pseudocode for this version  is given in \app{batched-updates}.

%=============================================================================
\subsection{Bayesian Inference using Approximate Rejection Sampling}
\label{sec:approx-rej}
%=============================================================================

Having used resampling to unify rejection sampling and particle filtering,
we can significantly improve the complexity of the resulting rejection filtering
algorithm by relaxing from exact rejection sampling.  Approximate rejection
sampling is similar to rejection sampling except
that it does not require that $P(E|x) \le \kappa_E$.  This means that the rescaled
likelihood $P(E|x)/\kappa_E$ is greater than $1$ for some
configurations.  This inevitably results in errors in the posterior distribution but can make the inference process much more efficient
in cases where a tight bound is unknown or when the prior has little support over the region where $P(E|x)/\kappa_E >1$.

The main question remaining is how  the
errors incurred from $P(E|x) > \kappa_E$ impact the posterior distribution.
To understand this, let us define
\begin{equation}
{\rm bad} := \left\{x: {P(E|x)} >{\kappa_E}\right\}.
\end{equation}
If the set of bad configurations is non--empty then it naturally leads to errors in the posterior and
can degrade the success probability in rejection filtering.  Bounds on these effects are
provided below.

\begin{corollary}\label{cor:badalgorithm}
If the assumptions of~\thm{crej} are met, except for the requirement that $P(x|E) \le \kappa_E$, and
$$\sum_{x\in {\rm bad}}  \left([P(E|x)-\kappa_E] P(x)\right) \le \delta P(E),$$
  then approximate rejection sampling is  efficient and samples from a distribution $\rho(x|E)$ such that ${\sum_x \sqrt{\rho(x|E) P(x|E)}} \ge 1-\delta$.
The probability of accepting a sample is at least ${P(E) (1-\delta)}/{\kappa_E}$.\label{thm:kappa}
\end{corollary}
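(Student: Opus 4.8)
The plan is to make the distribution of accepted samples explicit and then compute two things about it. A candidate $x$ is drawn from $P(x)$ and kept with probability $\min(P(E|x)/\kappa_E,1)$, so the per--attempt acceptance probability is $Z \defeq \sum_x P(x)\min(P(E|x)/\kappa_E,1)$ and the output distribution is $\rho(x|E)=P(x)\min(P(E|x)/\kappa_E,1)/Z$. I would split every sum over $x$ according to whether $x\in\bad$ or not: on the complement of $\bad$ the minimum equals $P(E|x)/\kappa_E$, while on $\bad$ it equals $1$. A one--line computation then gives
\[
\kappa_E Z = \sum_{x\notin\bad} P(x)P(E|x) + \kappa_E\sum_{x\in\bad}P(x) = P(E) - \sum_{x\in\bad}\bigl[P(E|x)-\kappa_E\bigr]P(x).
\]
The hypothesis bounds the subtracted term by $\delta P(E)$, hence $\kappa_E Z\ge(1-\delta)P(E)$, i.e.\ $Z\ge(1-\delta)P(E)/\kappa_E$, which is the claimed acceptance probability. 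Efficiency then follows exactly as in the proof of~\thm{crej}: the only change is that the expected number of attempts before an acceptance grows by the harmless factor $1/(1-\delta)$, so the polynomial--lower--bound hypothesis on $P(E)/\kappa_E$ is still enough.

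For the statement about $\rho(x|E)$ I would write out the Bhattacharyya coefficient $F\defeq\sum_x\sqrt{\rho(x|E)P(x|E)}$ using $P(x|E)=P(x)P(E|x)/P(E)$ and again split into good and bad terms. A good term contributes $P(x)P(E|x)/\sqrt{\kappa_E Z P(E)}$ and a bad term contributes $P(x)\sqrt{P(E|x)}/\sqrt{Z P(E)}$; pulling out a common $\sqrt{\kappa_E}$ gives
\[
F = \frac{1}{\sqrt{\kappa_E Z P(E)}}\left(\sum_{x\notin\bad}P(x)P(E|x) \;+\; \sum_{x\in\bad}P(x)\sqrt{\kappa_E P(E|x)}\right).
\]
The key observation is that for $x\in\bad$ we have $\kappa_E<P(E|x)$, so the geometric mean $\sqrt{\kappa_E P(E|x)}$ lies between $\kappa_E$ and $P(E|x)$ and in particular is at least $\kappa_E$. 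Replacing each $\sqrt{\kappa_E P(E|x)}$ in the bracket by $\kappa_E$ makes the bracket at least $\sum_{x\notin\bad}P(x)P(E|x)+\kappa_E\sum_{x\in\bad}P(x)=\kappa_E Z$, whence $F\ge \kappa_E Z/\sqrt{\kappa_E Z P(E)}=\sqrt{\kappa_E Z/P(E)}=\sqrt{1-\sum_{x\in\bad}[P(E|x)-\kappa_E]P(x)/P(E)}\ge\sqrt{1-\delta}\ge 1-\delta$.

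The bookkeeping in the sum splits and the inheritance of efficiency from~\thm{crej} are routine. The one step that requires care, and which I expect to be the main obstacle, is getting a bound on $F$ that does not degrade as $\kappa_E\to 0$: crude estimates, such as lower--bounding $F$ by its good--set contribution alone or bounding $\sqrt{P(E|x)}\ge P(E|x)$ on $\bad$, only produce a lower bound proportional to $\sqrt{\kappa_E}$, which is useless in the regime of interest where $\kappa_E$ is exponentially small. Keeping the $\sqrt{\kappa_E}$ attached to the bad terms so that a geometric mean appears, and then recognizing that this geometric mean is bounded below by $\kappa_E$, is exactly what collapses the numerator down to $\kappa_E Z$ and leaves a clean bound depending on $\delta$ alone.
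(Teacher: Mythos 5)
Your proof is correct, and it is genuinely more self-contained than what the paper does: the paper's entire proof of \cor{badalgorithm} is a two-line appeal to \thm{crej} together with Theorem 1 of the external reference [WKGS15], which is where the fidelity and acceptance-probability bounds for approximate rejection sampling actually live. You have in effect reconstructed that cited result from scratch. Your acceptance-probability computation ($\kappa_E Z = P(E) - \sum_{x\in{\rm bad}}[P(E|x)-\kappa_E]P(x) \ge (1-\delta)P(E)$) is exactly the right bookkeeping, and your handling of the overlap term is the nontrivial part done correctly: writing the accepted-sample distribution as $\rho(x|E)\propto P(x)\min(P(E|x)/\kappa_E,1)$, keeping the $\sqrt{\kappa_E}$ attached to the bad terms so that the geometric mean $\sqrt{\kappa_E P(E|x)}\ge\kappa_E$ appears, and collapsing the bracket to $\kappa_E Z$ avoids the spurious $\sqrt{\kappa_E}$ degradation that a cruder estimate would give. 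In fact your argument yields the slightly stronger bound $\sum_x\sqrt{\rho(x|E)P(x|E)}\ge\sqrt{1-\delta}\ge 1-\delta$. What the paper's route buys is brevity and reuse of an already-published lemma; what yours buys is a complete, verifiable derivation that makes explicit where the hypothesis $\sum_{x\in{\rm bad}}[P(E|x)-\kappa_E]P(x)\le\delta P(E)$ enters, and it correctly inherits efficiency from \thm{crej} with only the harmless $1/(1-\delta)$ inflation in the expected number of attempts.
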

\begin{proof}
Result follows directly from~\thm{crej} and Theorem 1 in~\cite{WKGS15}.
\end{proof}

\cor{badalgorithm} shows that taking a value of $\kappa_E$ that is too small for $P(E|x)/\kappa_E$ to be a valid likelihood function does not necessarily result in substantial errors in the posterior distribution.
We further elaborate in \app{sensitivity-kappa} with a numerical example.
This leads to an efficient method for approximate sampling from the posterior distribution assuming that $\delta$ is constant and $P(E|x)/\kappa_E$ is at most polynomially small.  Furthermore, it remains incredibly space efficient since the posterior distribution does not have to be explicitly stored to use rejection filtering.

\section{Numerical Experiments}
\begin{figure}
\begin{minipage}[b]{0.45\linewidth}
    \center{
        \includegraphics[width=\columnwidth]{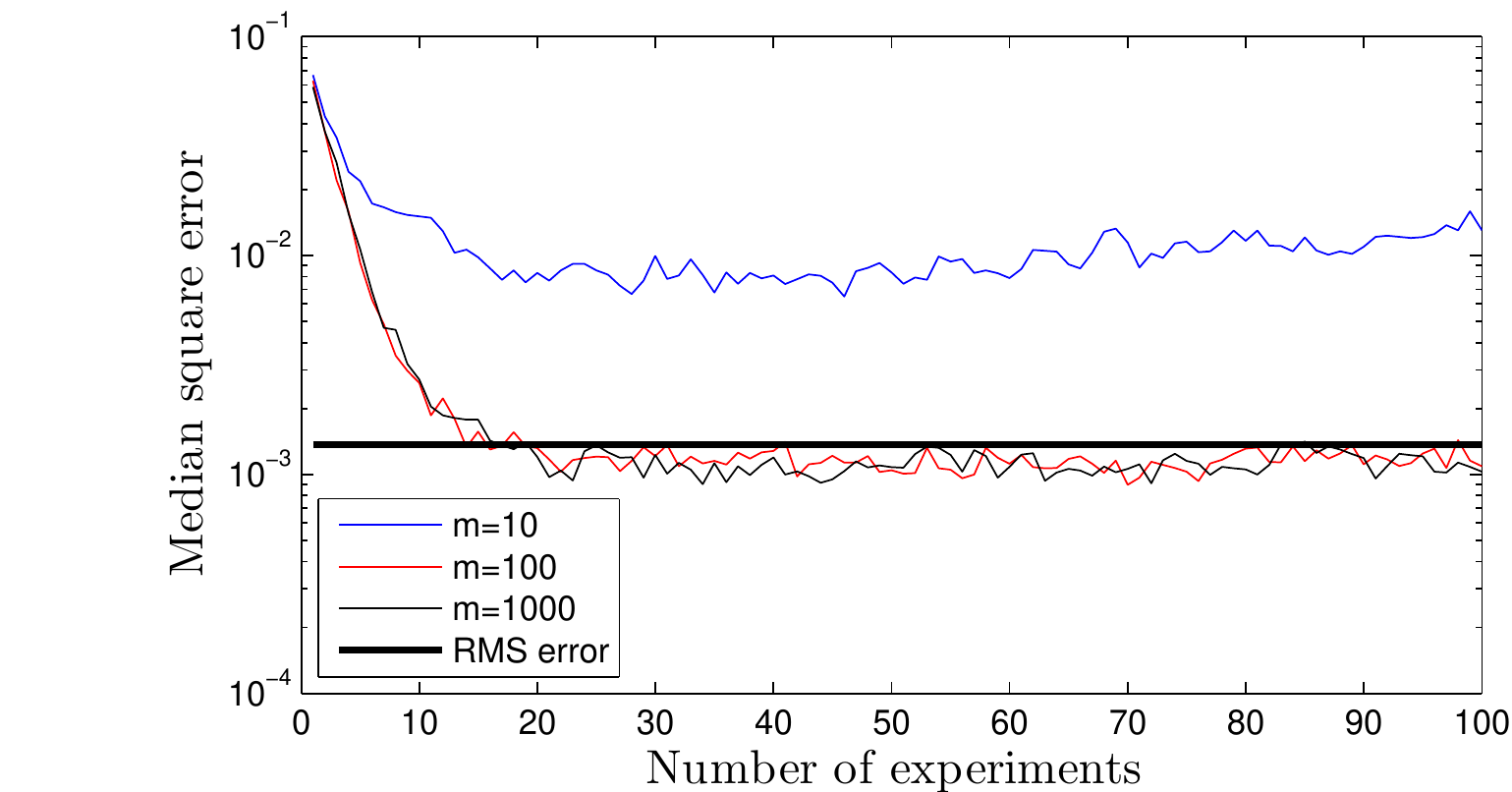}
    }
    \caption{
        \label{fig:crej-diffusion}
        Rejection filtering for the frequency estimation problem.  As expected, the error asymptotes to approximately the root mean square error.
    }
\end{minipage}
\hspace{1mm}
\begin{minipage}[b]{0.45\linewidth}
\centering
\includegraphics[width=0.45\columnwidth]{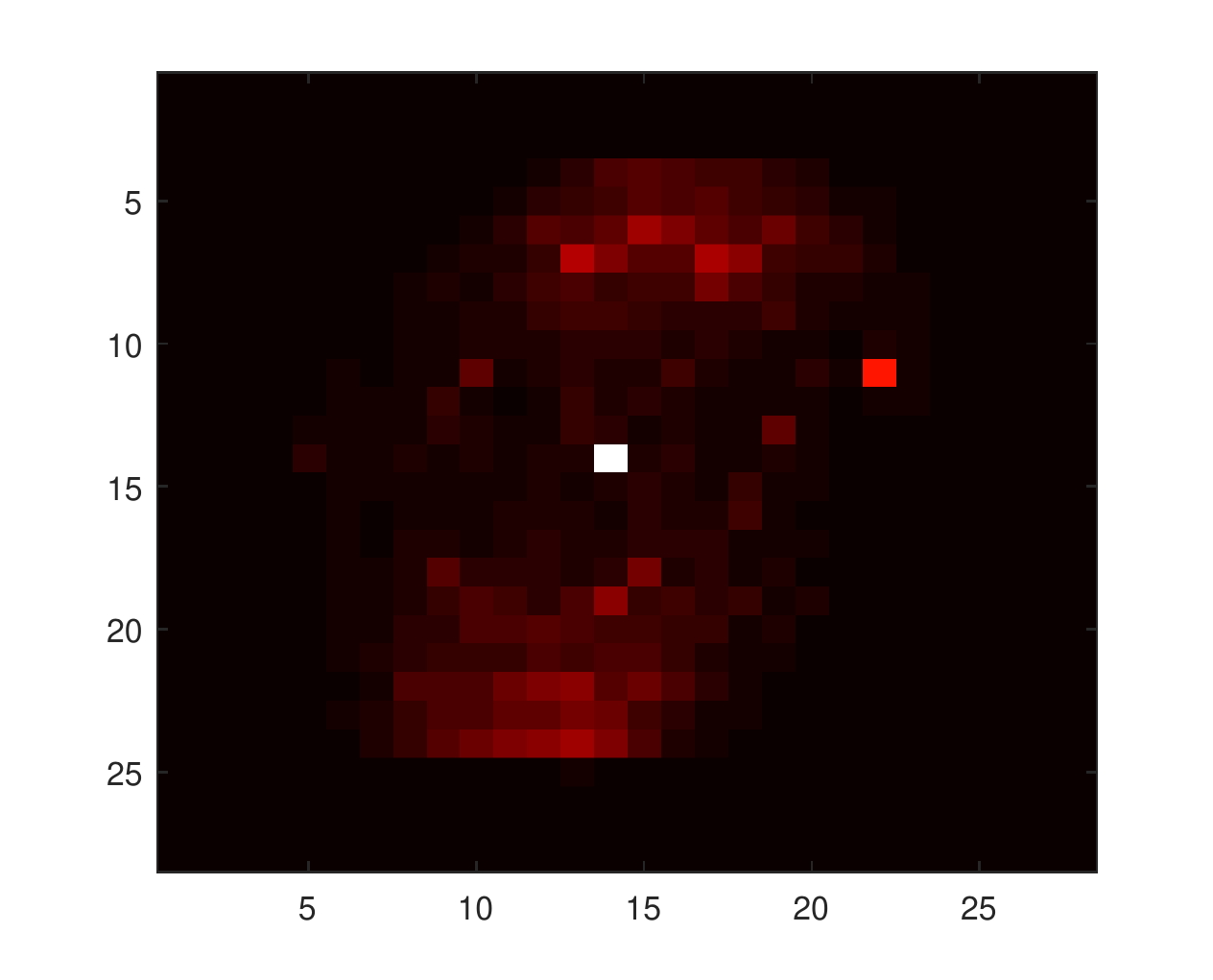}
\includegraphics[width=0.45\columnwidth]{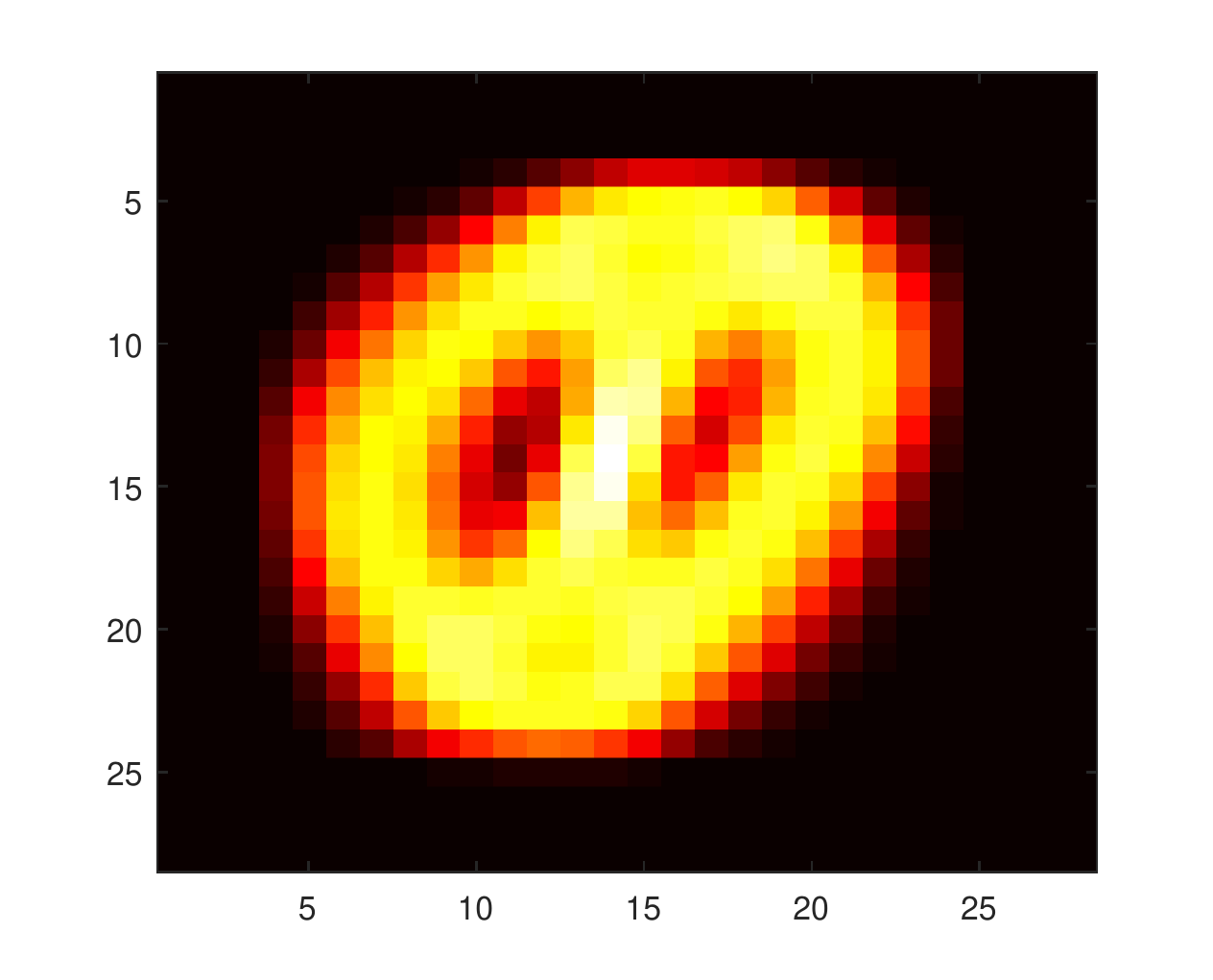}\\
\includegraphics[width=0.45\columnwidth]{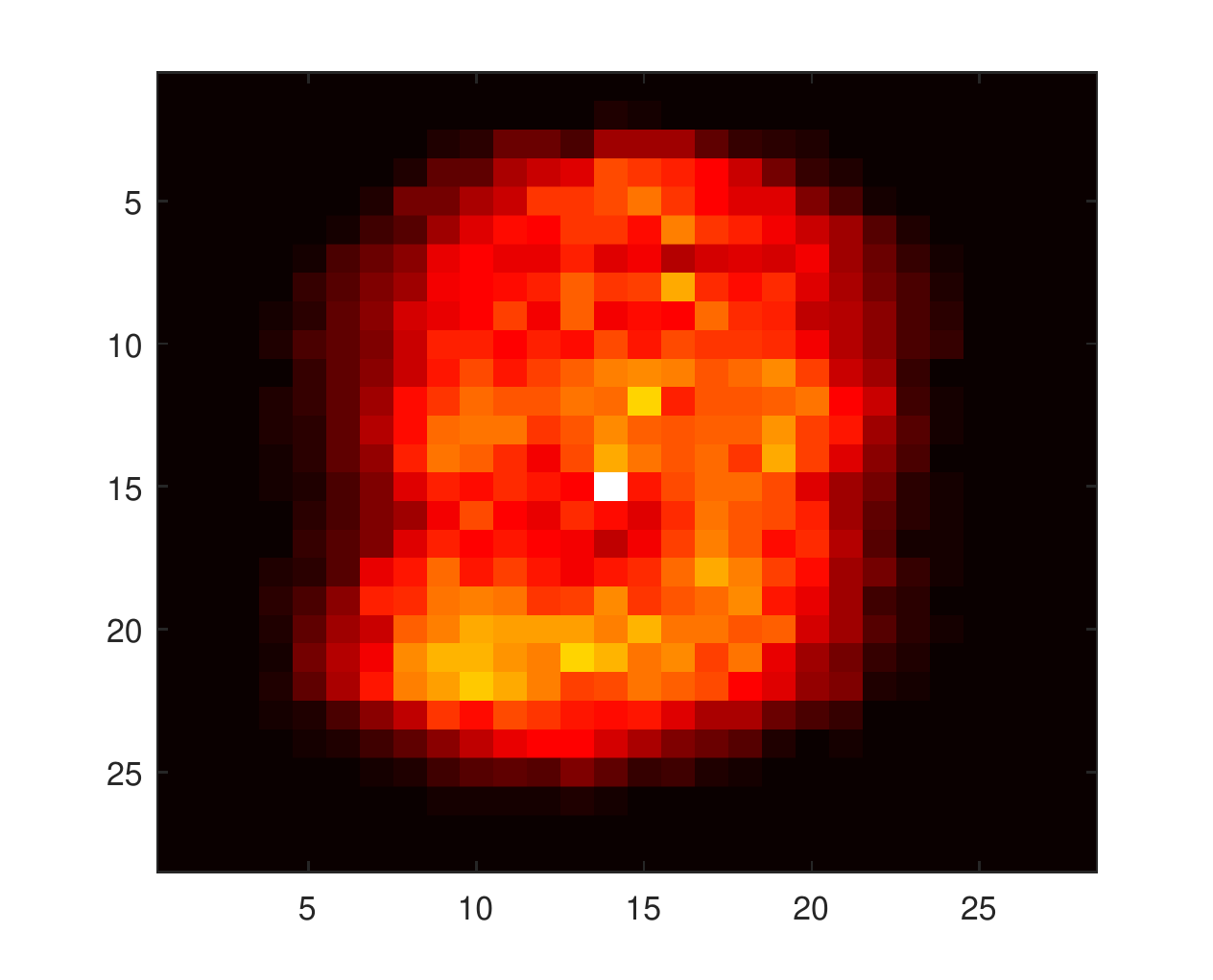}
\includegraphics[width=0.45\columnwidth]{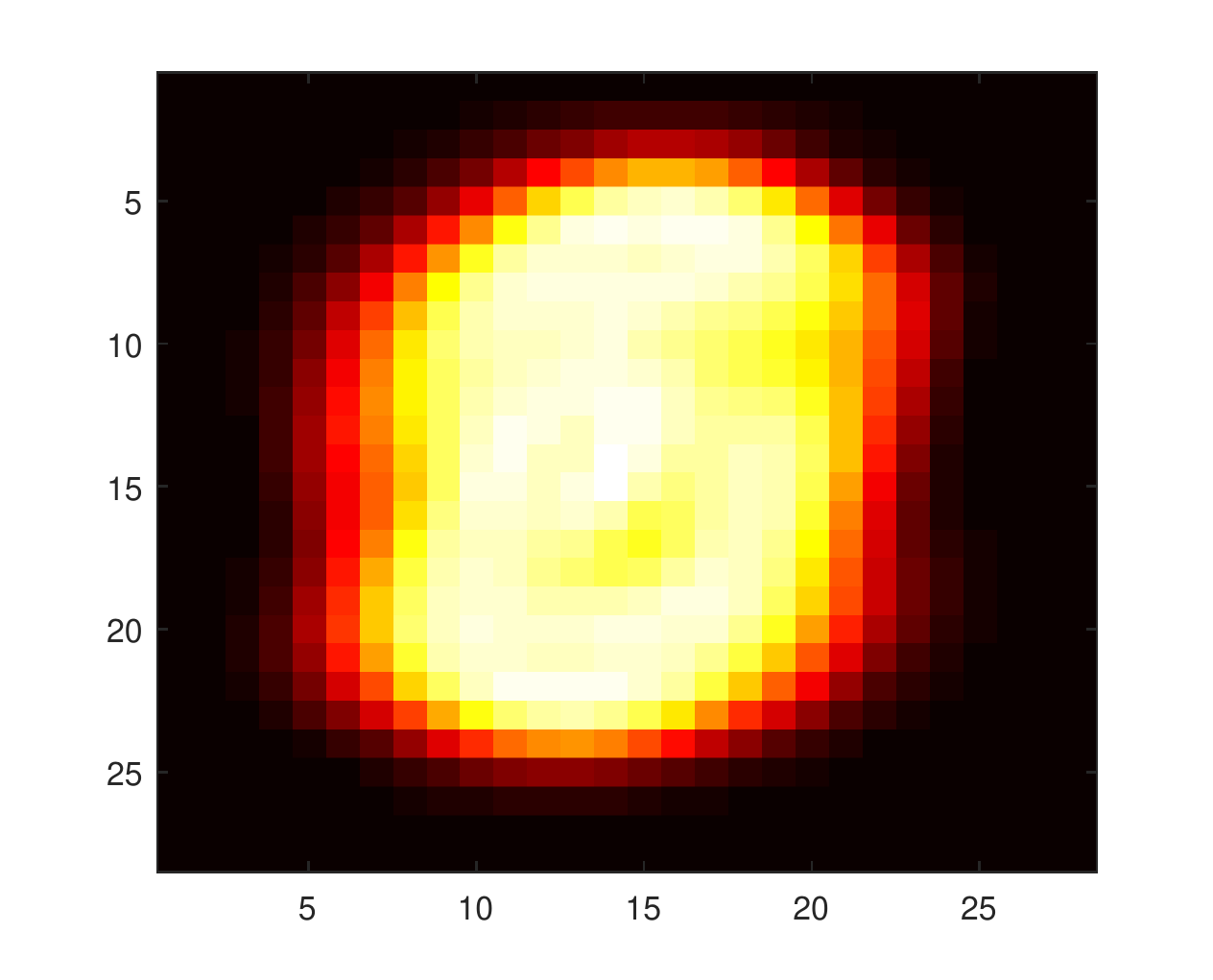}
\caption{(left) Heat map of frequency pixel is queried using RF.  (right) Variance of the pixels over data set.
Bottom plots are for even vs odd digits top is zero vs one.}\label{fig:HM}
\end{minipage}
\end{figure}

Our first numerical experiment is a proof of principle experiment involving inferring an unknown, time--dependent, frequency of an oscillator.
This application is of central importance in quantum mechanics wherein the precise estimation of such frequencies involves substantial
computational and experimental effort using traditional techniques~\cite{wiebe_efficient_2015,ferrie_how_2013}.  In this case, measurements have two outcomes and the likelihood function takes the form
$
P(0|\phi(t_k);M,\theta) = \cos^2(M[\phi(t_k)-\theta]),
$
where $\phi(t_k)$ is the unknown frequency evaluated at time $t_k$ and $M$ and $\theta$ are experimental settings that we optimize over to find a near--optimal
experiment given the current knowledge of the system.  We take $\phi(t_k)$ to follow a normal random walk in $k$ with standard deviation $\pi / 120$.

\fig{crej-diffusion} shows that rejection filtering is capable of accurately tracking this unknown frequancy using fewer than $100$ samples drawn from the prior distribution.  Furthermore, rejection filtering is capable of achieving this using less than $1$ kilobit of memory.  In contrast, sequential Monte--Carlo methods using Liu--West resampling  typically require roughly $100$ times the memory that rejection filtering requires because of their need to store an entire particle cloud rather than a single sample.

We provide further details of our experimental design heuristics and apply RF to digit classification in an active learning setting in the appendix.  We see,  in~\fig{HM}, that some low variance features are frequently used to classify digits whereas many high variance features are not.  We show in the appendix that RF can also beat KNN  for an appropriate choice of resampler.

%=============================================================================
\section{Conclusion}
\label{sec:conclusions}
%=============================================================================

We introduce a method, rejection filtering, that combines
rejection sampling with particle filtering. Rejection filtering retains many of the benefits
of each, while using substantially less memory than conventional methods for Bayesian inference in typical use cases.
In particular, if a Gaussian resampling algorithm is used then our method only requires remembering a single sample at a time, making it ideal
for memory-constrained and active learning applications.
We further illustrate the viability of our rejection filtering approach through numerical experiments
involving tracking the time-dependent drift of an unknown frequency and also in handwriting recognition.

While our work has shown that rejection sampling can be a viable method for performing Bayesian inference, there are many
avenues for future work that remain open.  One such avenue involves investigating whether ideas borrowed from the particle
filter literature, such as the unscented transformation~\cite{van2000unscented} or genetic mutation-selection algorithms~\cite{del2012adaptive,del2000branching}, can be adapted to fit our setting.  An even more exciting
application of these ideas may be to examine their application in online data acquisition in science and engineering.  Rejection
filtering provides the ability to perform adaptive experiments using embedded hardware, which may lead to a host of applications
within object tracking, robotics, and experimental physics that are impractical with existing technology.

%% END MATTER %%%%%%%%%%%%%%%%%%%%%%%%%%%%%%%%%%%%%%%%%%%%%%%%%%%%%%%%%%%%%%%%

\appendix

\section{Supplemental Analysis}

%-----------------------------------------------------------------------------
\subsection{Filtering Distributions for Time-Dependent Models}
\label{sec:time-dep}
%-----------------------------------------------------------------------------

Bayesian inference has been combined with
time-dependent models 
to perform object tracking and acquisition in many particle filter applications~\cite{isard_condensationconditional_1998,gustafsson_particle_2002}. Here, we show that
\CRej~naturally encompasses these applications by convolving posterior
distributions with Gaussian update kernels.

In time-dependent applications the true model is not
stationary, but rather changes as observations are made.  This poses a
challenge for na\i ve applications of Bayesian inference because drift in the
true model can cause it to move outside of the support of the  prior
distribution.  This drift results in the online inference algorithm failing to track
an object that moves suddenly and unexpectedly.

To see how this can occur in problems where the true parameters are time-dependent, consider the following likelihood function for a Bernoulli experiment
and a family of prior distributions with mean $\bar{x}$ and variance $\sigma$ such that
the overlap between the likelihood and the prior is given by
\begin{equation}
    \sum_x P(0|x; \bar{x}, \sigma(x)) P(x) \le e^{-|x_{\rm true} - \bar{x}| \gamma/\sigma}.
\end{equation}
If $\sigma$ is small then the small deviations of $x_{\rm true}$ away from $\bar{x}$ introduced by neglecting the time-dependence of $x_{\rm true}$ can cause the inner product to become exponentially small.
This in turn causes the complexity of resampling to be exponentially large, thereby removing guarantees of efficient learning.

Such failures in \CRej~are heralded by tracking the total number
of accepted particles $N_a$ in each update.  This is because $N_a$ estimates
$P(E) = \sum_x P(E | x) P(x)$.
Alternatively, we can do better by
instead incorporating a prediction step that diffuses the model parameters
of each particle \cite{isard_condensationconditional_1998}.
In particular, by convolving the prior with a filter function such as a
Gaussian, the width of the resultant distribution can be increased without
affecting the prior mean. 
In a similar way, \CRej~can be extended to include diffusion by using a resampling kernel
that has a broader variance than that of the accepted posterior samples. Doing so
allows \CRej~to track stochastic processes in a similar way to \CSMC, as described
 in \sec{numerical-experiments}.

Formally, we model our posterior distribution as
\begin{equation}
  P(x|E;t_{k+1}) = P(x|E;t_k) \star \mathcal{B}(0,\eta(t_{k+1} - t_k)),
\end{equation}
where $\mathcal{B}$ is a distribution with zero mean and variance
$\eta$ and $\star$ denotes a convolution over $x$.
Convolution is in general an expensive operation, but for cases where \CRej~uses a
Gaussian model for the posterior distribution, the resulting distribution
remains Gaussian under the convolution if $\mathcal{B}$ is also a Gaussian.
If the variance of the prior distribution is $s$ then it is easy to see from
the properties of the Fourier transform that the variance of
$\tilde{P}(x|E;t)$ is $s+\eta (t_{k+1} - t_k)$ and the mean remains $\bar{x}$.

%-----------------------------------------------------------------------------
\subsection{Model Selection}
\label{sec:model-sel}
%-----------------------------------------------------------------------------

The ability of \CRej~to include time-dependence
is especially useful when combined with Bayesian model selection.
Since the random variates of $N_a$ drawn at each step give a
frequency drawn from the \emph{total likelihood} $P(E) = \expect{x}{P(E |
x)}$, we can use \CRej~to estimate Bayes factors between two
different likelihood functions. In particular, the probability that a hypothesis $x$ will be accepted
by \alg{crej} is $P(E | x)$, so that marginalizing gives the desired $P(E)$.
Thus, $N_a$ at each step is drawn from a binomial distribution with mean $m P(E)$.
Using hedged maximum likelihood estimation \cite{ferrie_estimating_2012}, we can then
estimate $P(E)$ given $N_a$, even in the cases that $N_a = 0$ or $m$.

Concretely, consider running \CRej~in parallel for two
distinct models $M \in \{A, B\}$, such that all likelihoods are conditioned on a value
for $M$, $P(E | x, M)$. The estimated total likelihoods for each \CRej~run then give an
estimate of the Bayes factor $K$ \cite{akaike_likelihood_1981},
\begin{equation}
    K := \frac{\prod_i P(E_i | A)}{\prod_i P(E_i | B)} = \frac{\expect{\prod_i P(E_i | x, A)}{x}}{\expect{\prod_i P(E_i | x, B)}{x}}.
\end{equation}
If $K > 1$, then model $A$ is to be preferred as an explanation
of the evidence seen thus far. In particular, the expectation over model parameters
penalizes overfitting, such that a model preferred by $K$ must justify the dimensionality
of $x$. This is made concrete by noting that $K$ is well-approximated by the Bayesian information
criterion when the prior is a multivariate normal distributon \cite{akaike_likelihood_1980,raftery1999bayes}.

Using \CRej~to perform model selection, then, consists of accumulating
subsequent values of $N_a$ in a log-likelihood register $\ell$,
\begin{equation}
    \ell^{(k + 1)} = \ell^{(k)} + \ln\left[(N_a^{(k + 1)} + \beta) / (m + 2 \beta)\right],
\end{equation}
where superscripts are used to indicate the number of Bayes updates performed,
and  $\beta$ is a \emph{hedging parameter} used to prevent
divergences that occur when $N_a = 0$. Since this accumulation procedure
estimates the total likelihood from a two-outcome event
(acceptance/rejection of a sample), the value of $\beta = 1 / 2$ deals with the zero-likelihood case~\cite{ferrie_estimating_2012}.
Letting $\ell_A$ and $\ell_B$ be the hedged log-likelihood registers for models $A$ and $B$,
respectively. Then, the estimator $\hat{K} = e^{\ell_B} / e^{\ell_A}$ resulting from this hedging procedure
is thus an asymtotically-unbiased estimator for $K$ that has well-defined
confidence intervals \cite{cho_approximate_2013}.
The $\ln(m + 2\beta)$ term can be factored out in cases where $m$ is constant across models and evidence.

Model selection of this form has been used, for instance, to decide if
a diffusive model is appropriate for predicting future evidence \cite{granade_characterization_2015}.
Given the aggressiveness of the \CRej~resampling step, streaming model selection
will be especially important in assessing whether a diffusive inference model has
``lost'' the true value \cite{wiebe_efficient_2015}.

%=============================================================================
\section{Error Analysis}
\label{sec:error-analysis}
%=============================================================================

Since our algorithms are only approximate, an important remaining issue is that of error propagation in the estimates of the posterior mean
and covariance matrix.  We provide bounds on how these errors can spread and provide asymptotic criteria for stability below.  For notational convenience,
we take $\langle \cdot\!~,\cdot \rangle$ to be the inner product between two distributions and $\|\cdot\|$ to be the induced $2$--norm.

\begin{lemma}
    \label{lem:errprop}

    Let $P(x)$ be the prior distribution and $\tilde{P}(x)$ be an approximation to the prior such that $\tilde{P}(x) = P(x) -\Delta(x)$ and let $P(x|E)$ and $\tilde{P}(x|E)$ be the posterior distributions after event $E$ is observed for $x\in V\subset \mathbb{R}^N$ where $V$ is compact and $\|x\|\le \|x_{\rm max}\|$ for all $x\in V$.  If $|\langle P(E|x),\Delta(x) \rangle|/P(E) \le 1/2$ then the error in the posterior mean then satisfies
    $$
    E_1 \le 4 \frac{\langle P(E|x), |\Delta(x)|\rangle}{P(E)}\|x_{\max}\|,
    $$
    and similarly the error in the expectation of $xx^T$ is
    $$
    E_2 \le 4 \frac{\langle P(E|x), |\Delta(x)|\rangle}{P(E)}\|x_{\max}\|^2,
    $$
where $E_1:=\left\|\int_V  (P(x|E) -\tilde{P}(x|E)) x \mathrm{d}^N x \right\|$ and $E_2 := \left\|\int_V  (P(E|x) -\tilde{P}(E|x)) xx^T \mathrm{d}^N x \right\|$
\end{lemma}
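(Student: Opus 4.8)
The plan is to expand both posteriors via Bayes' rule, write their difference over a common denominator, and then bound the resulting numerator termwise using the crude bound $\|x\| \le \|x_{\max}\|$ on the compact set $V$.

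First I would write $P(x|E) = P(E|x)P(x)/P(E)$ with $P(E) = \langle P(E|x), P(x)\rangle$, and likewise $\tilde{P}(x|E) = P(E|x)\tilde{P}(x)/\tilde{P}(E)$ with $\tilde{P}(E) = \langle P(E|x),\tilde{P}(x)\rangle = P(E) - \delta_E$, where I set $\delta_E := \langle P(E|x), \Delta(x)\rangle$. The hypothesis $|\delta_E|/P(E) \le 1/2$ then gives $\tilde{P}(E) \ge P(E)/2 > 0$, so the approximate posterior is well-defined and its normalization is bounded below. This is the one place the hypothesis is used, and it is the crux of the argument: without it the denominator $\tilde{P}(E)$ could be arbitrarily small, making the error blow up.

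Next I would compute, by putting the two fractions over the common denominator $P(E)\tilde{P}(E)$,
$$P(x|E) - \tilde{P}(x|E) = P(E|x)\,\frac{\Delta(x)P(E) - P(x)\delta_E}{P(E)\tilde{P}(E)}.$$
Multiplying by $x$, integrating over $V$, taking the norm and applying the triangle inequality splits $E_1$ into two pieces. For the first, $\left\|\int_V P(E|x)\Delta(x)\,x\,\mathrm{d}^N x\right\| \le \|x_{\max}\|\,\langle P(E|x),|\Delta(x)|\rangle$; for the second, $\left\|\int_V P(E|x)P(x)\,x\,\mathrm{d}^N x\right\| \le \|x_{\max}\|\,P(E)$ together with $|\delta_E| \le \langle P(E|x),|\Delta(x)|\rangle$. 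Combining these gives $E_1 \le 2\|x_{\max}\|\,\langle P(E|x),|\Delta(x)|\rangle/\tilde{P}(E)$, and substituting $\tilde{P}(E) \ge P(E)/2$ yields exactly the claimed bound with constant $4$.

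For the second-moment error $E_2$ the argument is verbatim the same with $xx^T$ in place of $x$: since $\|\cdot\|$ is the induced $2$-norm and $xx^T$ is rank one, $\|xx^T\| = \|x\|^2 \le \|x_{\max}\|^2$, which simply replaces one factor of $\|x_{\max}\|$ by $\|x_{\max}\|^2$ everywhere. The only genuine obstacle is controlling $\tilde{P}(E)$ from below — everything else is the triangle inequality and the uniform bound $\|x\| \le \|x_{\max}\|$ — and that is precisely what the normalization hypothesis provides.
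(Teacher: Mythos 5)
Your proof is correct and follows essentially the same route as the paper's: expand both posteriors via Bayes' rule, split the difference with the triangle inequality, bound $\|x\|$ (resp.\ $\|xx^T\|$) by $\|x_{\max}\|$ (resp.\ $\|x_{\max}\|^2$), and use $|\langle P(E|x),\Delta(x)\rangle|/P(E)\le 1/2$ to control the perturbed normalization, with the factor $4$ arising as $2\times 2$ in both arguments. The only difference is cosmetic: you put the two fractions over the common denominator $P(E)\tilde{P}(E)$ and bound $\tilde{P}(E)\ge P(E)/2$ directly, whereas the paper factors out $\bigl(1-1/(1+\delta_E/P(E))\bigr)$ and invokes $|1-1/(1-y)|\le 2|y|$ on $[-1/2,1/2]$.
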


\lem{errprop} shows that the error in the posterior mean using an approximate prior is small given that the inner product of the likelihood function with the errors is small relative to $P(E)$. 

\begin{theorem}\label{thm:meanCov}
If the assumptions of~\lem{errprop} are met and the rejection sampling algorithm uses $m$ samples from the approximate posterior distribution to infer the posterior mean  and $x_j\sim \tilde{P}(x|E)$ then the error in the posterior mean scales as
$$
 O\left(\left[\frac{{N}}{\sqrt{m}} +\frac{\langle P(E|x), |\Delta(x)|\rangle}{P(E)}\right]\|x_{\max}\|\right).
$$
and the error in the estimate of $\Sigma$ is

$$
 O\left(\left[\frac{{N} }{\sqrt{m}} +\frac{\langle P(E|x), |\Delta(x)|\rangle}{P(E)}\right]\|x_{\max}\|^2\right).
$$
\end{theorem}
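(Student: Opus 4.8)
The plan is to bound the total error in the estimated posterior mean by a triangle inequality, splitting it into (i) the error coming from the fact that we draw $m$ finite samples from $\tilde P(x|E)$ rather than from $\tilde P(x|E)$ exactly, and (ii) the error coming from the fact that $\tilde P(x|E) \ne P(x|E)$, which is exactly what $\lem{errprop}$ controls. That is, if $\hat\mu = \frac1m\sum_{j=1}^m x_j$ with $x_j \sim \tilde P(x|E)$, and $\mu = \int_V P(x|E)\,x\,\dd^N x$, then
\begin{equation}
\left\| \hat\mu - \mu \right\| \le \left\| \hat\mu - \expect{x}{\tilde P(x|E)} \right\| + \left\| \expect{x}{\tilde P(x|E)} - \mu \right\|,
\end{equation}
where the second term is precisely $E_1 \le 4\langle P(E|x),|\Delta(x)|\rangle\|x_{\max}\|/P(E)$ by $\lem{errprop}$. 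So the whole job reduces to bounding the sampling fluctuation in the first term.

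For the sampling term I would argue componentwise. Each coordinate $(x_j)_i$ is an i.i.d.\ random variable bounded in magnitude by $\|x_{\max}\|$, so its variance is at most $\|x_{\max}\|^2$; hence the standard deviation of the $i$-th coordinate of $\hat\mu$ is at most $\|x_{\max}\|/\sqrt{m}$, and by linearity (or a Hoeffding/Chebyshev bound) the expected $2$-norm of the fluctuation vector over all $N$ coordinates is $O(\sqrt{N}\cdot\|x_{\max}\|/\sqrt{m})$. Actually, to match the stated $N/\sqrt{m}$ rather than $\sqrt{N}/\sqrt{m}$, I would instead take the cruder route of controlling each coordinate to accuracy $\|x_{\max}\|/\sqrt{m}$ and summing the $N$ coordinate errors, giving $\|\hat\mu - \expect{x}{\tilde P(x|E)}\| = O(N\|x_{\max}\|/\sqrt{m})$; this is the bound consistent with a max-norm per-coordinate guarantee lifted to the $2$-norm. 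Combining with the $\lem{errprop}$ term gives the claimed $O\big(\big[\tfrac{N}{\sqrt m} + \tfrac{\langle P(E|x),|\Delta(x)|\rangle}{P(E)}\big]\|x_{\max}\|\big)$.

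The covariance estimate is handled the same way: the estimator of $\expect{xx^T}{}$ is an empirical average of the i.i.d.\ matrices $x_j x_j^T$, each with entries bounded by $\|x_{\max}\|^2$, so the sampling error in each entry is $O(\|x_{\max}\|^2/\sqrt m)$ and summing over entries (or using the operator-norm / max-norm relationship on an $N\times N$ matrix) yields the $O(N\|x_{\max}\|^2/\sqrt m)$ term, while the approximate-prior contribution is $E_2 \le 4\langle P(E|x),|\Delta(x)|\rangle\|x_{\max}\|^2/P(E)$ from $\lem{errprop}$. One then has to pass from the error in $\expect{xx^T}{}$ and in $\mu$ to the error in $\Sigma = \expect{xx^T}{} - \mu\mu^T$; since $\|\mu\|\le\|x_{\max}\|$, a short computation bounding $\|\hat\mu\hat\mu^T - \mu\mu^T\|$ by $(\|\hat\mu\| + \|\mu\|)\|\hat\mu-\mu\| = O(\|x_{\max}\|\cdot(\text{mean error}))$ shows this contributes nothing worse than the already-claimed order.

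The main obstacle is being careful about exactly which norm is meant and getting the dimensional factor right: the clean concentration statement gives $\sqrt N/\sqrt m$, and reaching the stated $N/\sqrt m$ requires interpreting the guarantee as a max-norm bound on each coordinate/entry that is then summed (or, equivalently, invoking a union bound over the $N$ coordinates with the error budget split evenly). A secondary subtlety is the finite-sample bias when estimating $\Sigma$ with the $1/(N_a-1)$ normalization in $\alg{crej}$ versus the population covariance, but this bias is $O(\|x_{\max}\|^2/m)$, dominated by the $O(N\|x_{\max}\|^2/\sqrt m)$ term, so it can be absorbed.
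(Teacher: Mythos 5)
Your proposal is correct and follows essentially the same route as the paper's proof: a triangle-inequality split into the Monte--Carlo sampling fluctuation and the approximate-prior bias controlled by \lem{errprop}, per-component $O(\|x_{\max}\|/\sqrt m)$ (resp.\ $O(\|x_{\max}\|^2/\sqrt m)$) bounds lifted to the vector/matrix norm with a factor of $N$, and the same $\|\hat\mu\hat\mu^T-\mu\mu^T\|$ argument to transfer the mean error into the covariance. The minor points you flag (the $\sqrt N$ versus $N$ bookkeeping and the $1/(m-1)$ normalization) are handled in the paper exactly as you suggest, so no gap remains.
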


This theorem addresses the question of when we can reasonably expect the update process discussed in~\alg{crej} to be stable.  By stable, we mean that small initial errors do not exponentially magnify throughout the update process.  \thm{meanCov} shows that small errors in the prior distribution do not propagate into large errors in the estimates of the mean and posterior matrix given that $P(E)= \langle P(E|x),P(x)\rangle$ is sufficiently large.  In particular, \thm{meanCov} and an application of the Cauchy--Schwarz inequality show that such errors are small if $\|x_{\max}\|\le 1$, $m\in \Omega(N^2)$ and 
$$
\langle\Delta(x),\Delta(x)\rangle \ll \frac{P^2(E)}{{\langle P(E|x),P(E|x)\rangle}}.
$$
However, this does not fully address the question of stability because it does not consider the errors that are incurred from the resampling step.

We can assess the effect of these errors by assuming that, in the domain of interest, the updated model after an experiment satisfies a Lipschitz condition
\begin{equation}
\max_x|P_{\mu,\Sigma}(x) - P_{\mu' ,\Sigma'}(x)| \le L(\|\mu- \mu'\| +\|\sqrt{\Sigma}- \sqrt{\Sigma'}\|),
\end{equation}
for some $L\in \mathbb{R}$.  This implies that error in the approximation to the posterior distribution, $\Delta'(x)$ obeys
\begin{equation}
\max_x |\Delta'(x)| \in O\left( \frac{L\int_V P(E|x) \mathrm{d}^Nx \max_x |\Delta(x)|}{P(E)}\right)
\end{equation}
Stability is therefore expected if $\|x_{\max}\|\le 1$, $m\in \Omega(N^2)$ and
\begin{equation}
P(E) \gg {L\int_V P(E|x) \mathrm{d}^Nx }.
\end{equation}
Thus we expect stability if (a) low likelihood events are rare, and (b) the Lipschitz constant for the model is small.  In practice, both of these potential failures can couple together to lead to rapid growth of errors.  It is quite common, for example, for errors in the procedure to lead to unrealistically low estimates of the distribution which causes the Lipschitz constant to become large.  This in turn could coincide with an unexpected outcome to destabilize the learning algorithm. 
We overcome this problem by invoking random restarts, as outlined in the next section,
though other strategies exist~\cite{wiebe_efficient_2015}.

%=============================================================================
\section{Numerical Experiments}
\label{sec:numerical-experiments}
%=============================================================================

In this section, we demonstrate rejection filtering both in the context of
learning simple functions in a memory-restricted and time-dependent fashion,
and in the context of learning more involved models such as handwriting
recognition. In both cases, we see that rejection filtering provides significant
advantages.

%-----------------------------------------------------------------------------
\subsection{Multimodal Frequency Estimation}
%-----------------------------------------------------------------------------

In the main body, we demonstrate the effectiveness of rejection filtering using as an
example strongly multimodal and periodic likelihood functions, such as arise
in frequency estimation problems drawn from the study of quantum mechanical
systems \cite{wiebe_efficient_2015,ferrie_how_2013}.
These likelihood functions serve as useful test cases for
Bayesian inference algorithms more generally, as the multimodality of these
likelihoods forces a tradeoff between informative experiments and
multimodality in the posteriors. Thus, rejection filtering succeeds in these cases only
if our method correctly models intermediate distributions so that appropriate
experiments can be designed.

Concretely, we consider an inference problem with a single observed variable $E\in\{0, 1\}$, decision variables $x_-$ and $t$ and with a single hidden variable $x$ that we allow
to vary with time.
Our aim is to infer the current value of $x$ given values of $E$ obtained for different values of $(x_-,t)$.
The likelihood of measuring $E=1$ for these experiments is
\begin{equation}
  \label{eq:inv-model}
 \Pr(1 | x; t, x_-,k) = \cos^2((x(k) - x_-) t / 2),
\end{equation}
where $k$ is the index of the current update.  The true model $x(k)$ is taken to
follow a random walk with $x(0)\sim \operatorname{Uniform}(0,\pi/2)$ and the distribution
of $x(k+1)$ given $x(k)$ is
\begin{equation}
    x(k+1)=x(k) + \NN(0,(\pi/120)^2).
\end{equation}
The goal in such cases is to identify such drifts and perform active feedback to calibrate
against the drift.

We design experiments $(x_-, t)$ using a heuristic that picks $x_-$ to be a random
vector sampled from the prior and $t=1/\sqrt{{\rm Tr}~ \Sigma}$ \cite{wiebe_efficient_2015}.
We use this heuristic because it is known
to saturate the Bayesian Cramer--Rao bound for this problem~\cite{dauwels_computing_2005,wiebe_hamiltonian_2014} and is
much faster than adaptively choosing $(x_-, t)$ to minimize the Bayes risk, which
we take to be the expected quadratic loss after performing an experiment given
the current prior.

The performance of \CRej~applied to this case is shown in~\fig{crej-diffusion}.
In particular, the median error incurred by \CRej~achieves the optimal
achievable error $(\pi / 120)^2$ with as few as $m = 100$ sampling attempts.
Thus, our \CRej~algorithm continues to be useful in the case of time-dependent
and other state-space models.  Although this demonstration is quite simple, it
is important to emphasize the minuscule memory requirements for this task
mean that this tracking problem can be solved using a memory limited processor in
a different device that is physically close to the system in question.
Close proximity is necessary in applications, such as in control problems or head tracking in virtual reality~\cite{lavalle_sensor_2013,yao2014oculus}, to make the latency
low relative to the dynamical timescale of the object or system that is being tracked.

%-----------------------------------------------------------------------------
\subsection{Handwriting Recognition}
%-----------------------------------------------------------------------------

A more common task is handwriting recognition.  Our goal in this task is to use
Bayesian inference to classify an unknown digit taken from the MNIST repository~\cite{lecun1998mnist} into
one of two classes.  We consider two cases: the restricted case of classifying only $1$ digits and $0$ digits (zero vs one) and classifying digits as either even or odd (even vs odd).

We cast the problem in the language of Bayesian inference by assuming the likelihood
function
\begin{equation}
P(E|x;i,\sigma)\propto e^{-(x_i - E)^2/2\sigma^2}\label{eq:gausseq},
\end{equation}
which predicts the probability that a given pixel $i$ takes the value $E$,
given that the training image $x$ is the true model that it drew from.

We pick this likelihood function because if we imagine measuring every pixel in the image then the
posterior probability distribution, given a uniform initial distribution, will typically be sharply peaked around
the training vector that is closest to the observed vector of pixel intensities.
Indeed, taking the product over all pixels in an image produces the radial basis function
familiar to kernel learning \cite{scholkopf_learning_2001}.

Unlike the previous experiment, we do not implement this in a memory--restricted
setting because of the size of the MNIST training set.
Our goal instead is to view the image classification problem through the lens of active feature learning.  
We assume that the user has access to the entire training set, but wishes to classify a digit by reading as few of the training vectors features (pixels) to the training
image as possible.  Although such a lookup is not typically expensive for MNIST, in more general tasks, features may be very expensive to compute, and our experiment shows that our approach enables identification of important features.  This reduces the computational requirements of the classification task, and also serves to identify salient features for future 
classification problems.

Such advantages are especially relevant to search wherein features, such as term occurences and phrase matches across terms, can be expensive to compute on the fly.  It also can
occur in experimental sciences where each data point may take minutes or hours to either measure
or process.  In these cases it is vital to minimize the number of queries made to the training
data.  We will show that our Bayesian inference approach to this problem allows this to be solved using
far fewer queries than other methods, such as kNN, would require.  We also show that our method can
be used to \emph{extract} the relevant important features (i.e. pixels) from the data set.  This is examined further in
\app{features}.

We perform these experiments using an adaptive guess heuristic, similar to that employed in the frequency estimation example.
The heuristic chooses the pixel label, $i$, that has the largest variance of intensity
over the $m$ training vectors that compose the particle cloud.  We then pick $\sigma$ to be the standard
deviation of the intensity of that pixel.  As learning progresses the diversity in the set of particles
considered shrinks, which causes the variance to decrease.  Allowing $\sigma$ to shrink proportionally accelerates the inference process by amplifying the effect of small differences in $P(E|x)$.  

\begin{figure}
\centering
\includegraphics[width=0.6\columnwidth]{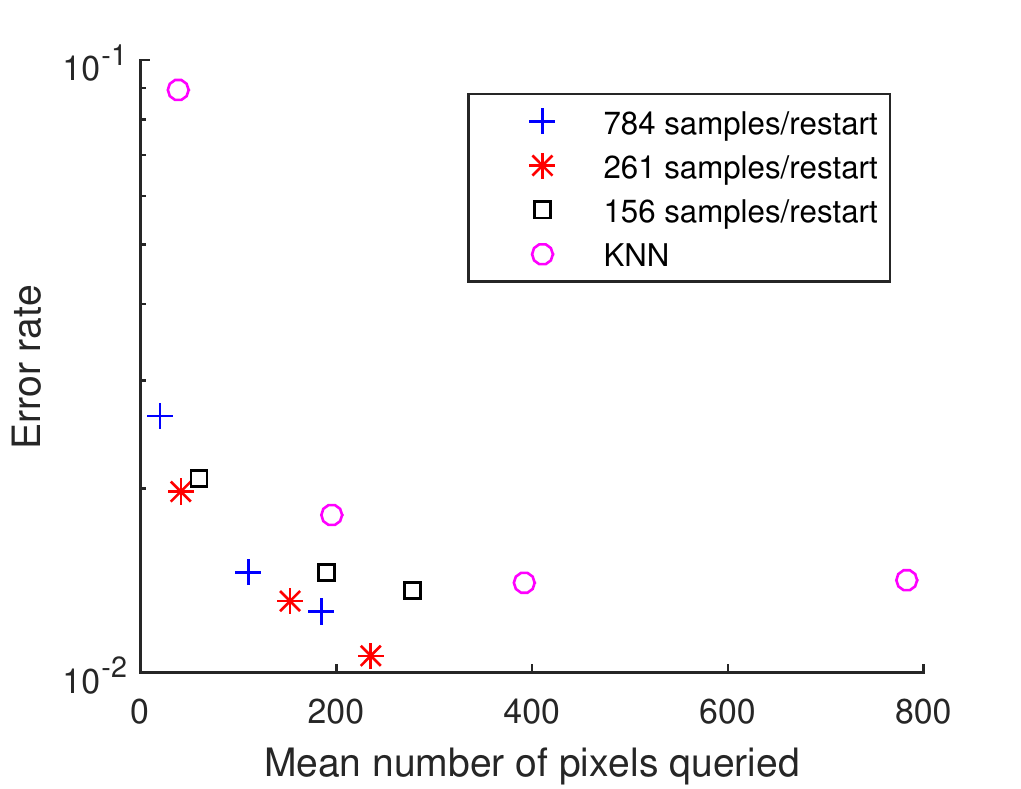}
\caption{Classification errors for odd/even MNIST digits for rejection filtering with 784 maximum experiments distributed over 1,3,5 restarts and stopping condition $\mathcal{P}=0.1,0.01,0.001$.}\label{fig:errorplot}
\end{figure}

We repeat this process until the sample probability distribution has converged to a distribution that
assigns probability at most $\mathcal{P}$ to one of the two classes in the problem.  This process is restarted a total of
$1,3$ or $5$ times subject to the constraint that at most $784$ queries are made (divided equally over each of the restarts).
The label assigned to the test vector is then the most frequently appearing label out of these tests.  This ensures that our method
incurs at most the same cost as na\i ve kNN, but is pessimistic as we do not allow the algorithm to store the results of prior queries which could reduce the 
complexity of the inference.

 The resampling step as described in~\alg{crej} is unnatural here because there are only two classes rather than a continuum.  Instead, we use a method similar to the bootstrap filter.  Specifically, when resampling we draw a number of particles from both
classes proportional to the sample frequency in the posterior distribution.  For each of these classes we then replicate the surviving particles until $95\%$ of the total population is replenished
by copies from the posterior cloud of samples.  The remaining $5\%$ are drawn uniformly from the training vectors in the corresponding class.

\fig{HM} illustrates the differences between maximum variance experiment design and the adaptive method we use in rejection filtering.  These differences
are perhaps most clearly seen in the problem of classifying one vs zero digits from MNIST.  Our adaptive approach most frequently queries the middle pixel, which is the 
highest variance feature over the training set.  This is unsurprising, but the interesting fact is that the second most frequently queried pixel is one that has relatively low variance
over the training set.  In contrast, many of the high-variance pixels near the maximum variance pixel are never queried despite the fact that they have large variance over the set.
This illustrates that they carry redundant information and can be removed from the training set.  Thus this adaptive learning algorithm can also be used to provide a type of model compression, similar to feature selection
or PCA.

The case for odd vs even is more complicated.  This is because the handwritten examples have much less structure in them and so it is perhaps unsurprising that less dramatic compression is possible when examining that class. However, the data still reveals qualitatively that even here there is a disconnect between the variance of the features over the training set and their importance in classification.
An interesting open question is how well rejection filtering
can work to select features for other classification methods.

We will now compare rejection filtering to kNN classification.  kNN is a standard approach for digit classification that performs well but it can be prohibitively slow for
large training sets \cite{yu2010high} (indexing strategies can be used to combat this problem~\cite{yu2001indexing}).  In order to make the comparison as fair as possible between the two, we compare kNN to rejection filtering by truncating the 
MNIST examples by removing pixels that have low variance over the training set.  This removes some of the advantage our method has by culling pixels near
the boundary of the image that contain very little signal  (see~\fig{HM}) and yet substantially contribute to the cost of kNN in an active learning setting.

Feature extraction~\cite{zhang2006svm,weinberger2008fast,min2009deep} or the use of 
deformation models~\cite{keysers2007deformation} can also be used to improve the performance of kNN.
We ignore them here for simplicity because they will also likely improve rejection filtering.
%this will likely only improve the performance of both methods. We leave investigation of this
%for future work.

\fig{errorplot} shows that in certain parameter regimes, approximate Bayesian inference via rejection sampling can not only achieve higher classification accuracy on average for a smaller
number of queries to the test vector, but also can achieve $25\%$ less error even if these constraints are removed.  This result is somewhat surprising given that we chose our likelihood function to correspond to nearest neighbor classification if $\sigma$ is held constant.  However, we do not hold $\sigma$ constant in our inference but rather choose it adaptively as the experiment proceeds.  This changes the weight of the evidence provided by each pixel query and allows our algorithm to outperform kNN classification despite the apparent similarities between them.

%=============================================================================
\section{Proofs of Theorems}
\label{app:proofs}
%=============================================================================

In this Appendix, we present proofs for the theorems presented in the main
body.

\begin{proofof}{\thm{crej}}
There are two parts to our claim in the theorem.  The first is that the rejection sampling algorithm is efficient given the theorem's assumptions
and the second is that it only requires $O({\rm dim}(x)^2 \log(1/\epsilon))$ memory to approximate the appropriate low--order moments of
 the posterior distribution.

For each of the $m$ steps in the algorithm the most costly operations are 
\begin{enumerate}
\item Sampling from $P$.
\item Sampling from the uniform distribution.
\item The calculation of $xx^T$.
\end{enumerate}
The first two of these are efficient by the assumptions of the theorem.  Although it may be tempting to claim that efficient algorithms are known
for sampling from the uniform distribution, the existence of such deterministic algorithms is unknown since it is not known whether the complexity
classes $\BPP$ and $\P$ coincide.  The remaining operation can be computed using $O({\rm dim}(x)^3)$ arithmetic operations, each of which can
be performed (to within bounded accuracy) efficiently on a Turing machine.  Therefore the cost of the inner loop is $O(m{\rm dim}(x)^3)$ which is efficient
if $m$ is taken to be a constant.

The remaining operations require at most $O({\rm dim}(x)^3)$ arithmetic operations and thus do not dominate the cost of the algorithm.  The main question remaining
is how large $m$ needs to be and how many bits of precision are required for the arithmetic.  Both the error in the mean and the elements of the covariance matrix scale as $O(1/\sqrt{N_a})$ where $N_a$ is the number of accepted samples that pass through the rejection filter.  Thus if both are to be computed within error $\epsilon$ then $N_a \in O(1/\epsilon^2)$.  However, in order to get a sample accepted we see from the Markov inequality and the definition of the exponential distribution that $m$ must scale like $m\in O(1/P_{\rm success} \epsilon^2)$.  We then see from~\cor{badalgorithm} that $P_{\rm success} \in \Omega(\min_x P(E|x)/\kappa_E)$, which we assume is at most polynomially small.  Ergo the sampling process is efficient given these assumptions and the fact that $\epsilon$ is taken to be a constant for the purposes of defining efficiency.

The dominant requirements for memory arise from the need to store $\Sigma$, $\mu\mu^T$ and $xx^T$.  There are at most $O({\rm dim}(x)^2)$ elements in those matrices and so if each is to be stored within error $\epsilon$ then at least $O({\rm dim}(x)^2\log(1/\epsilon))$ bits are required.  Note that the incremental formulas used in the algorithm are not very numerically stable and need $2N$-bit registers to provide an $N$-bit answer.  This necessitates doubling the bits of precision, but does not change the asymptotic scaling of the algorithm.  Similarly, the $m\in O(1/\epsilon^2)$ repetitions of the algorithm also does not change the asymptotic scaling of the memory because $\log(1/\epsilon^3) \in O(\log(1/\epsilon))$.

What does change the scaling is the truncation error incurred in the matrix multiplication.  The computation of a row or column of $xx^T$, for example, involves ${\rm dim}(x)$ multiplications and additions.  Thus if each such calculation were computed to to within error $\epsilon$, the total error is at most by the triangle inequality ${\rm dim}(x) \epsilon$.  Therefore in order to ensure a total error of $\epsilon$ in each component of the matrix we need to perform the arithmetic using $O(\log({\rm dim}(x)/\epsilon))$ bits of precision.  

The incremental statistics, for both quantities, involve summing over all $m$ particles used in the rejection filtering algorithm.  If we assume that fixed point arithmetic is used to represent the numbers then we run the risk of overflowing the register unless its length grows with $m$.  
The result then follows.
\end{proofof}

\begin{proofof}{\lem{errprop}}
Using the definition of $\tilde{P}(x)$ and Bayes' rule it is easy to see that the error in the posterior mean is
\begin{align}
\Biggr| \int_V \frac{P(E|x)P(x)x}{\langle P(E|x),P(x) \rangle}\left( 1 - \frac{1}{1+\frac{\langle P(E|x),\Delta(x)\rangle }{\langle P(E|x),P(x) \rangle}}\right) - \frac{P(E|x) \Delta(x)x}{P(E)}\left(\frac{1}{1+\frac{\langle P(E|x),\Delta(x)\rangle }{\langle P(E|x),P(x) \rangle}} \right)\mathrm{d}^Nx \Biggr|.\label{eq:intbd}
\end{align}
Using the fact that $|1-1/(1-y)| \le 2|y|$ for all $y\in [-1/2,1/2]$ it follows from the assumptions of the theorem and the triangle inequality that~\eq{intbd} is bounded above by
\begin{align}
 \int_V \frac{2P(E|x)P(x)\|x\| |\langle P(E|x),\Delta(x) \rangle|}{P(E)^2}\mathrm{d}^Nx+ \int_V\frac{2P(E|x) |\Delta(x)|\|x\|}{P(E)}\mathrm{d}^Nx.\label{eq:intbd2}
\end{align}
Now using the fact that $\|x\|\le \|x_{\max}\|$ and the definition of the inner product, we find that~\eq{intbd2} is bounded above by
\begin{equation}
\frac{2 (|\langle P(E|x),\Delta(x) \rangle| + \langle P(E|x), |\Delta(x)| \rangle))\|x_{\max}\|}{P(E)}.
\end{equation}
The  result then follows from a final application of the triangle inequality.

The analogous proof for the error in the posterior expectation of $xx^T$ follows using the exact same argument after replacing the Euclidean norm with the induced $2$--norm for matrices.  Since both norms satisfy the triangle inequality, the proof follows using exactly the same steps after observing that $\|xx^T\|\le \|x_{\max}\|^2$ for all $x\in V$.
\end{proofof}

\begin{proofof}{\thm{meanCov}}
\lem{errprop} provides an upper bound on the error in the mean of the posterior distribution that propagates from errors in the components of our prior distribution.  We then have that if we sample from this distribution then the sample standard deviation of each of the $N$ components of $x$ is $O(\sigma_{\max}/\sqrt{m})$.  Thus from the triangle inequality the sample error in the sum is at most 
\begin{equation}
O\left(\frac{{N}\sigma_{\max}}{\sqrt{m}}\right)\in O\left( \frac{N\|x_{\max}\|}{\sqrt{m}}\right).
\end{equation}  
The triangle inequality shows that the sampling error and the error that propagates from having an incorrect prior are at most additive.  Consequently the total error in the mean is at most the the sum of this error and that of \lem{errprop}.  Thus the error in the mean is
\begin{equation}
O\left(\left[\frac{N}{\sqrt{m}}+ \frac{\langle P(E|x),|\Delta(x)|\rangle}{P(E)}\right]\|x_{\max}\| \right)\label{eq:mnerr}
\end{equation}

 The calculation for the error in the estimate of the covariance matrix is similar.  First, note that $1/(m-1)= 1/m +O(1/m^2)$ so we can asymptotically ignore $m/(m-1)$.  Let $\mu = \int_V P(x|E) x\mathrm{d}x +\epsilon v$, where $\|v\|\le 1$.  We then have from our error bounds on the estimate of the posterior mean that
\begin{eqnarray}
\| \mu\mu^T &-& \int_V P(x|E) x\mathrm{d}x \int_V P(x|E) x^T\mathrm{d}x \|\nonumber \\&\le& \epsilon \left\|\int_V P(x|E) x\mathrm{d}^Nx v^T\right\|+\epsilon\left\|v\int_V  P(x|E) x^T\mathrm{d}^Nx\right\| + O(\epsilon^2).\nonumber\\
&\in&  O\left(\left[\frac{{N}}{\sqrt{m}} +\frac{\langle P(E|x), |\Delta(x)|\rangle}{P(E)}\right]\|x_{\max}\|^2\right),\label{eq:garbage}
\end{eqnarray}
where we substitute~\eq{mnerr} for $\epsilon$ and use $\int_V P(x|E) x\mathrm{d}^Nx\le \|x_{\rm max}\|$.

Now let us focus on bounding the error in our calculation of $\int_V P(E|x) xx^T \mathrm{d}x$. Using the triangle inequality, the error in the estimate of the expectation value of $x x^T$ is, to within error $O(1/m^{3/2})$, at most
\begin{equation}
\Biggr\|\frac{1}{m} \sum_{j=1}^m x_j x_j^T - \int_V \tilde{P}(x|E) x x^T\mathrm{d}^Nx\Biggr\|+\Biggr\| \int_V \tilde{P}(x|E) x x^T\mathrm{d}^Nx-\int_V {P}(x|E) x x^T\mathrm{d}^Nx\Biggr\|.\label{eq:xxT}
\end{equation}
The first term in \eq{xxT} can be bounded by bounding the sample error in each of the components of the matrix.  For any component $[xx^T]_{k,\ell}$ the Monte--Carlo error in its estimate is
\begin{equation}
O\left(\frac{\sigma({[x]_k[x]_\ell})}{\sqrt{m}}\right)\in O\left(\frac{\|x_{\max}\|^2}{\sqrt{m}}\right).
\end{equation}
The $2$--Norm of an $N\times N$ matrix is at most $N$ times its max--norm, which means that
\begin{equation}
\Biggr\|\frac{1}{m} \sum_{j=1}^m x_j x_j^T - \int_V \tilde{P}(x|E) x x^T\mathrm{d}^Nx\Biggr\|\in O\left(\frac{N\|x_{\max}\|^2}{\sqrt{m}}\right).\label{eq:xxTMC}
\end{equation}
The theorem then follows from inserting~\eq{xxTMC} into~\eq{xxT} and applying~\lem{errprop}, and combining the result with~\eq{garbage} to bound the error in the covariance matrix.
\end{proofof}

Note that in the previous theorem that we do not make assumptions that the components of $x$ are iid.  If such assumptions are made then tighter bounds can be proven.

%=============================================================================
\section{Sensitivity to $\kappa_E$}
\label{app:sensitivity-kappa}
%=============================================================================

\begin{figure}
    \begin{center}
        \includegraphics[width=0.75\textwidth]{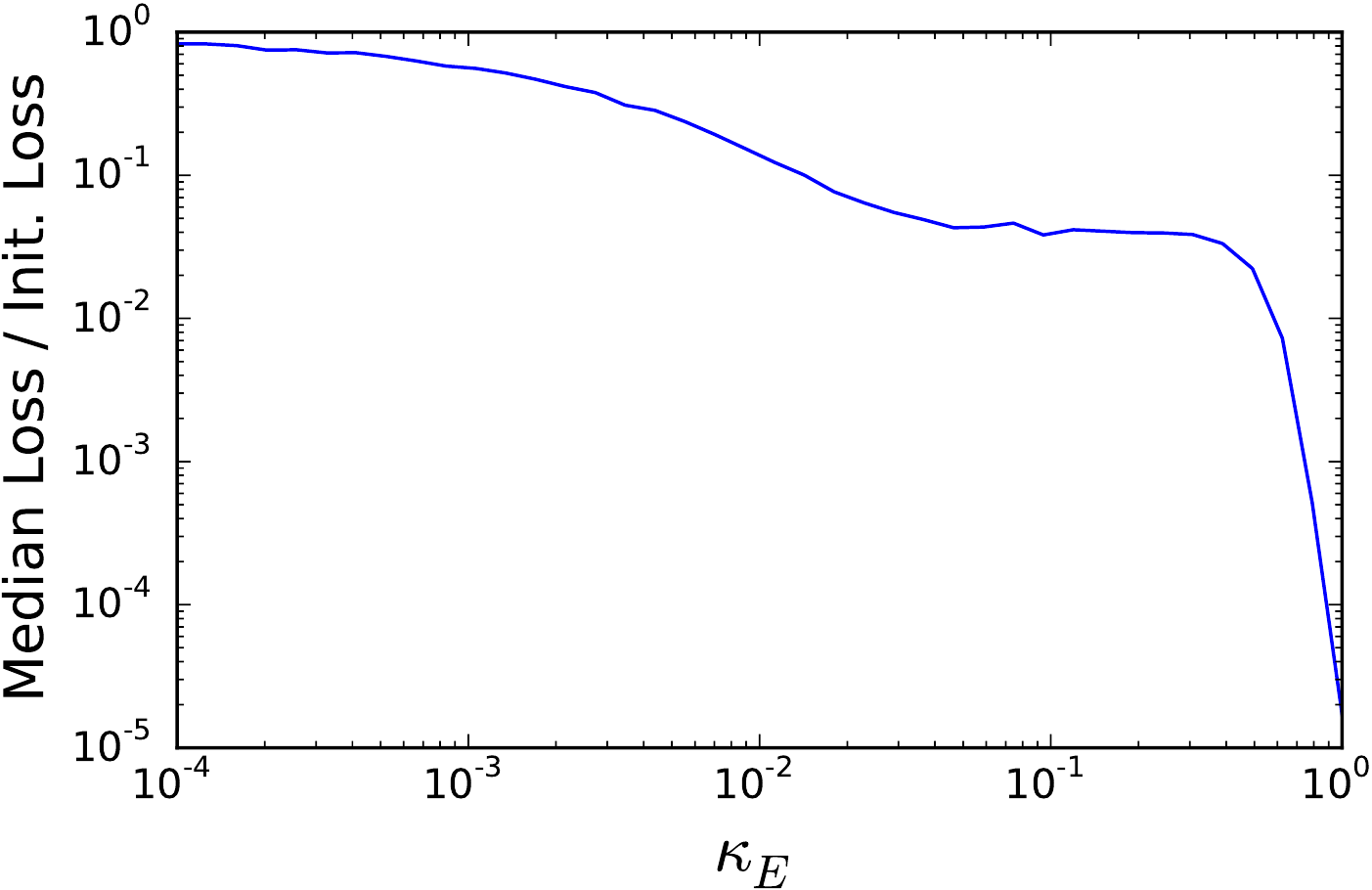}
    \end{center}
    \caption{\label{fig:sensitivity-kappa}
        Sensitivity of inversion model {inv-model} with PGH. The median loss
        over 6,000 trials after 100 measurements is shown as a function of $\kappa_E$.
        For clarity, we normalize the median loss by the initial loss (that is, the median
        loss
        before performing any rejection filtering updates). 
    }
\end{figure}

% NB: not using refs to avoid problems with the split.
Theorem 1 in the main body guarantees a bound on the error in a particular Bayes update due to
$\kappa_E$ being chosen smaller than $\max_x P(E|x)$ for evidence $E$.
However, this only holds for a single update, such that it is not immediately clear
how such errors propagate for an entire \emph{set} of experiments. This is especially
important for heuristic online experiment design, such as via the particle guess heuristic,
or for active learning. To address this, here we show the effects of choosing an inappropriate $\kappa_E$
for the frequency estimation model of {inv-model}. 

In particular,
we use the particle guess heuristic to design 100 measurements~\cite{wiebe_efficient_2015}, then update using
and 100 rejection filter attempts per measurement, with a recovery factor of
$r=2\%$. We then report in \fig{sensitivity-kappa} the median loss of this procedure as a function
of $\kappa_E$. Importantly, since the this experimental guess heuristic tends to result in likelihoods with
peaks of order unity, any $\kappa_E < 1$ will generate bad samples with high probability over the $m$ particles.

We see that taking $1>\kappa_E\gtrsim 2/3$ does not prevent the algorithm from learning the true model rapidly, although it certainly does
degrade the quadratic loss after $100$ measurements.  On the other hand, as $\kappa_E$ approaches zero the evidence of learning dissapears.  This is to be expected since if $P(E|x)>1$ for all $x$ then all samples are accepted and the approximate posterior will always equal the prior.  We similarly notice evidence for modest learning for $\kappa_E < 0.5$, and note that a plateau appears in the loss ratio for $\kappa_E \in [0.04,0.4]$ where the loss ratio does not monotonically grow with $1/\kappa_E$.  This illustrates that learning still can take place if a value of $\kappa_E$ is used that
is substantially less than the ideal value.  The ability to tolerate small values of $\kappa_E$ is important in cases where $\kappa_E$ is not known apriori and empirical values, such as those obtained via Monte--Carlo sampling, must be used to make the rejection probability polynomially small.

%=============================================================================
\section{Feature Selection for Rejection Sampling}
\label{app:features}

In the main body we alluded to the fact that the frequency with which a feature is selected in rejection filtering can be used as an
estimate of the importance of that feature to the classification.  We saw evidence that this application would be reasonable from
noting in MNIST classification problems that some pixels were never, or very infrequently, used and so should be able to be culled from the 
vectors without substantially affecting the classification accuracy.  

\begin{figure}[t!]
    \begin{center}
        \includegraphics[width=0.5\textwidth]{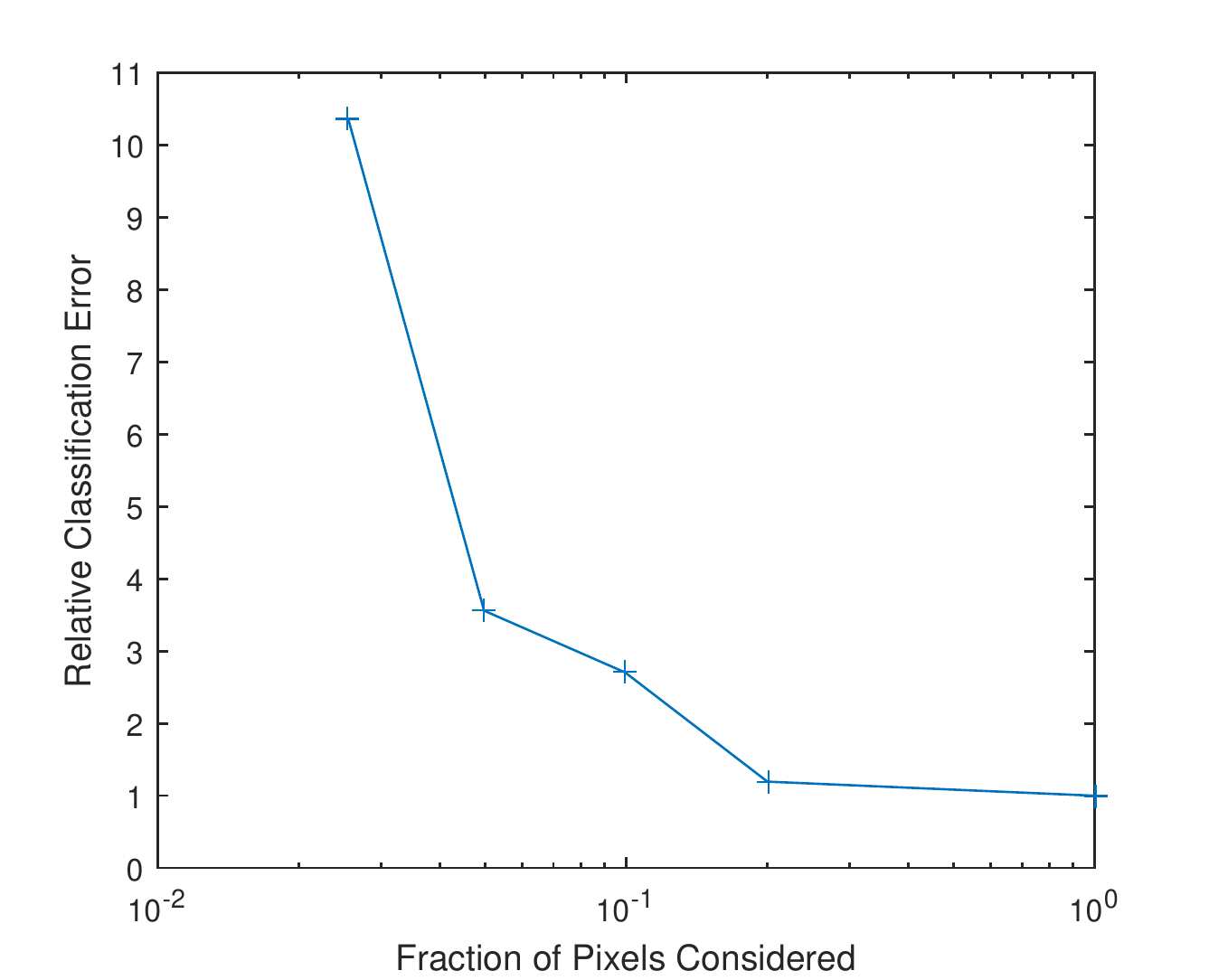}
    \end{center}
    \caption{\label{fig:Features}
Scaling of the classification error as a function of the number of pixels considered in MNIST digit classification.
    }
\end{figure}

Here we investigate this further for the problem of classifying zero vs one digits within MNIST.  We consider the case where $3$ restarts, each
consisting of at most $261$ samples, are performed with $\mathcal{P}=0.001$.  Our protocol begins by computing the frequency that pixels are queried over
the full case where all $784$ features in the MNIST images are used and then computing a histogram of how many samples were drawn for each pixel.  Then we divide the pixels
into sets based the relative frequency that they were used.  For example, we find all pixels whose frequency is less than the $95^{\rm th}$ percentile and remove the offending pixels from the entire MNIST set.  We then perform the classification on using only these pixels and compare to the original classification accuracy.  The number of features as a function of the percentile considered for zero vs one is given below.

\begin{table}[h!]
\centering
\begin{tabular}{|c|c|}
\hline
Percentile & Number of Features\\
\hline
0 & 784\\
35 &784\\
36 &503\\
50 &392\\
75&196\\
80&157\\
90&78\\
95&39\\
97.5&20\\
\hline
\end{tabular}
\end{table}

This strongly suggests that truncating at least $35\%$ of the data will not impact the classification whatsoever.  This does not, however, mean that the pixels that become significant at the $36^{\rm th}$ percentile will substantially affect classification accuracy since an acceptable classification might be reached using other, more significant, features in the set.  

We examine the feature extraction ability of rejection filtering by looking at the mean classification accuracy over $100$ different random shufflings of the vectors in the two classes for MNIST digits, using a $10$ to $1$ train--test split.  The data is shown in~\fig{Features}.  We see that if we use the frequency data provided by rejection sampling to remove all but the most significant $20\%$ of the training data then the classification accuracy is not significantly reduced.  After this point the impact of ignoring features begins to grow roughly linearly with the fraction of points ignored.  This continues until only the training examples are trimmed down to $95\%$ of their original length (39 features) before the classification accuracy becomes strongly impacted by removing features from consideration.  Nonetheless, for all the points considered the classification accuracy remains high (greater than $99\%$).

This shows that we can use this approach to choose salient features from an ensemble and by following this criteria we can cull unimportant features from a large set based on the frequencies that the data was needed in training rejection filtering experiments.  While this process shows advantages in this simple case, more involved experiments are needed to compare its performance to existing feature extraction methods such as PCA and to determine whether adaptive schemes that iteratively find the most important features as ever increasing portions of the original vectors are culled will be needed to achieve good performance in complex data sets.

%=============================================================================

%=============================================================================
\section{Batched Updating}
\label{app:batched-updates}
%=============================================================================

Although we focused in the main body on memory restricted applications, it is also possible to exploit the fact that the
rejection sampling procedure is inherently parallelizable.
This comes at the price of increasing the overall
memory usage. In \alg{batchcrej}, we describe a batched form of our algorithm, assuming a model in which samples are sent by a server to individual processing nodes and the accepted samples are then returned to the server.

\begin{algorithm}
    \caption{Batched update for \CRej}
    \label{alg:batchcrej}
    \begin{algorithmic}
        \Require Prior distribution $\pi:\mathbb{R}^D \mapsto [0,1]$, array of evidence $\vec{E}$, number of attempts $m$, a constant $0<\kappa_E\le 1$, a recovery factor $r \ge 0$, the prior mean $\mu$ and the covariance matrix $\Sigma$.
%        and a mapping $\operatorname{F}$ between $(\mu,\sigma^2)$ to a family of probability distributions that has mean $\mu$ and covariance matrix $\Sigma$. Typically $F$ will yield a Gaussian distribution but other possibilities exist.
        \Ensure  The mean and coviariance matrix of updated distribution $\mu$,$\Sigma$ and $N_a$ which is the number of samples accepted.
        \Function{BatchUpdate}{$\vec{E}$, $m$, $\kappa_E$, $\mu$, $\Sigma$, $r$, $N_{\rm batch}$}
  \State{$(M,S,N_a) \gets 0$}
          \For{{\bf each} $i \in 1 \to N_{\rm batch}$}
  \State Pass $\vec{E},m,\kappa_E, \mu,\Sigma,r$ to processing node $i$.
  \State Set local variables $(\mu^{i}, \Sigma^{i},N_a^{i})\gets \Call{RFUpdate}{\vec{E},m,\kappa_E,\mu,\Sigma,r}$.
  \If{$N_a^i >0$}
  \State $M^i \gets \mu^i N_a^i$
  \State $S^i \gets (N_a^i-1)\Sigma +N_a^i\mu\mu^T$ 
  \State Pass $N_a^i$, $M^i$ and $S^i$ back to the server.       
  \Else \State Pass $(0,0,0)$ back to the server.
  \EndIf
  \EndFor
    \If{$\sum_i N_a^i > 0$}
     \State $\mu\gets \sum_i M^{i}/\sum_i N_a^i $
     \State $\Sigma \gets \frac{1}{\sum_i N_a^i -1}\left(\sum_i S^i - \sum_i N_a^i \mu\mu^T \right)$
  \State\Return $(\mu,\Sigma,N_a)$
   \Else
  \State\Return $(\mu, (1+r)\Sigma),N_a)$

   \EndIf
          
        \EndFunction
    \end{algorithmic}
\end{algorithm}

\bibliographystyle{unsrt}
\bibliography{qsmc}
\end{document}